\newtheorem{theorem}{Theorem}
\newtheorem{corollary}{Corollary}
\newcommand{\w}{\bm{\mathrm w}}
\title{A Multi-Task Gradient Descent Method for Multi-Label Learning}
\author{Lu Bai\textsuperscript{\rm 1}, Yew-Soon Ong\textsuperscript{\rm 1}, Tiantian He\textsuperscript{\rm 1}, Abhishek Gupta\textsuperscript{\rm 2}\\
\textsuperscript{\rm 1}{School of Computer Science and Engineering, Nanyang Technological University, Singapore}\\
\textsuperscript{\rm 2}{Singapore Institute of Manufacturing Technology (SIMTech), A*STAR, Singapore}}
\begin{document}

\maketitle

\begin{abstract}
Multi-label learning studies the problem where an instance is associated with a set of labels. By treating single-label learning problem as one task, the multi-label learning problem can be casted as solving multiple related tasks simultaneously. In this paper, we propose a novel Multi-task Gradient Descent (MGD) algorithm to solve a group of related tasks simultaneously. In the proposed algorithm, each task minimizes its individual cost function using reformative gradient descent, where the relations among the tasks are facilitated through effectively transferring model parameter values across multiple tasks. Theoretical analysis shows that the proposed algorithm is convergent with a proper transfer mechanism. Compared with the existing approaches, MGD is easy to implement, has less requirement on the training model, can achieve seamless asymmetric transformation such that negative transfer is mitigated, and can benefit from parallel computing when the number of tasks is large. The competitive experimental results on multi-label learning datasets validate the effectiveness of the proposed algorithm.
\end{abstract}

\section{Introduction}

Multi-label learning deals with the problem that one instance is associated with multiple labels, such as a news document can be labeled as sports, Olympics, and ticket sales \cite{zhang2014review}. Formally, let $\mathcal X\in\mathbb R^p$ denote the $p$-dimensional feature space and $\mathcal Y\in\mathbb R^{T}$ denote the label space with $T$ class labels. Given the multi-label training set $\mathcal D=\{(\bm x_j, \bm y_j)| 1\leq j\leq n\}$, where $n$ is number of instances, $\bm x_j\in\mathcal X$ is the feature vector for the $j$-th instance and $\bm y_j\in\{0,1\}^{T}$ is the set of labels associated with the $j$-th instance. The task of multi-label learning is to learn a function $h:\mathcal X\to\mathcal Y$ from $\mathcal D$ which can assign a set of proper labels to an instance. 

One straightforward method to solve the multi-label learning problem is to decompose the problem into a set of independent binary classification problems \cite{boutell2004learning}. This strategy is easy to implement and existing single-label classification approaches, e.g., logistic regression and SVM, can be utilized directly. However, as can be seen by the news document example, an instance with the Olympics label has a high probability to have the label of sports. The correlations among the labels may provide useful information for one another and help to improve the performance of multi-label learning \cite{zhang2014review,gibaja2015tutorial}. 

Over the past years, a lot of methods have been proposed to improve the performance of multi-label learning by exploring the label correlations. Methods such as classifier chains \cite{read2011classifier}, calibrated label ranking \cite{furnkranz2008multilabel}, and random $k$-labelsets \cite{tsoumakas2010random} usually have high complexity with a large number of class labels. \cite{cai2013new,huang2016learning,huang2018joint} considered taking the label correlations as prior knowledge and incorporating it into the model training to utilize the label correlations. \cite{huang2012multi,zhu2017multi} exploited label correlations through learning a latent label representation and optimizing label manifolds. \cite{feng2019collaboration} explored the correlations by solving an optimization problem which models the contribution of related labels, and then incorporating the learned correlations into the model training. In the existing approaches, a well-designed training model is required to achieve notable performances.

Inspired by the merits of first-order methods and taking into account the importance of correlations among the labels, a novel Multi-task Gradient Descent (MGD) algorithm is proposed in this paper to solve the multi-label learning problem. Treating a single-label learning problem as single task, the multi-label learning can be casted as solving multiple related tasks simultaneously. In MGD, each task minimizes its individual cost function using the gradient descent algorithm and the similarities among the tasks are then facilitated through transferring model parameter values during the optimization process of each task. We prove the convergence of MGD when the transfer mechanism and the step size of gradient descent satisfy certain easily achievable conditions. Compared with the existing approaches, MGD is easy to implement, has less requirement on the training model, and can achieve seamless asymmetric transformation such that negative transfer is mitigated \cite{lee2016asymmetric}. In addition, MGD can also  benefit from parallel computing with small amount of information processed centrally when the number of tasks is large.

The rest of the paper is organized as the follows. Previous works related to multi-label learning are firstly reviewed. Secondly, we introduce the proposed MGD and provide the theoretical analysis, including model convergence and computational complexity. Thirdly, we present how the proposed MGD is extensively tested on real multi-label learning datasets and compared with strong baselines. At last, we summarize the proposed approach and the contributions of the paper.

\section{Related Work}
Based on the order of information being considered, existing multi-label learning approaches can be roughly categorized into three major types \cite{zhang2014review}. For first-order methods, the label correlations are ignored and the multi-label learning problem is handled in a label by label manner, such as BR \cite{boutell2004learning} and LIFT \cite{zhang2014lift}. Second-order methods consider pairwise relations between labels, such as LLSF \cite{huang2016learning} and JFSC \cite{huang2018joint}. High-order methods, where high-order relations among label subsets or all the labels are considered, such as RAkEL \cite{tsoumakas2010random}, ECC \cite{read2011classifier}, LLSF-DL \cite{huang2016learning}, and CAMEL \cite{feng2019collaboration}. Generally, the higher the order of correlations being considered, the stronger is the correlation-modeling capabilities, while on the other hand, the more computationally demanding and less scalable the approach becomes.

Treating a single-label learning problem as one task, the multi-label learning problem can be seen as a special case of multi-task learning problem, where the feature vectors $\bm x_j$ for $j=1,...,n$ are the same for different tasks. In majority of multi-task learning method, the relations among the tasks are promoted through regularization in the overall objective function that composed of all the tasks' parameters, such as feature based approaches \cite{argyriou2007multi,obozinski2006multi,liu2009blockwise,han2014encoding,chen2009convex} and task relation based approaches \cite{evgeniou2004regularized,gornitz2011hierarchical,zhang2014regularization}.
Specifically, for the second-order multi-label learning approaches in \cite{cai2013new,huang2016learning,huang2018joint}, the label correlation matrix, which is taken as a prior knowledge obtained based on the similarity between label vectors, is often incorporated as a structured norm regularization term that regulates the learning hypotheses or perform label-specific feature selection and model training.

In contrast to the existing multi-label and multi-task learning approaches which incorporate correlation information into the model training process in the form of regularization, MGD serves as the first attempt to incorporate the correlations by transferring model parameter values during the optimization process of each task, i.e., when minimizing its individual cost function.

\section{The MGD Approach}
In this section, we elaborate the proposed MGD algorithm for multi-label learning. We firstly introduce the mathematical notations used in the manuscript. We then generically formulate the multi-label learning problem and introduce how MGD can effectively solve multi-label learning problem via the reformative gradient descent where the correlated parameters are transferred across multiple tasks. At last, we perform the theoretical analysis of MGD, including convergence proof and computational complexity.

Throughout this paper, normal font small letters denote scalars, boldface small letters denote column vectors, and capital letters denote matrices. $\bm 0$ denotes zero column vector with proper dimension, $I_n$ denotes identity matrix of size $n\times n$. $A'$ denotes the transpose of matrix $A$ and $\otimes$ denotes the Kronecker product. $[\bm z_i]_{\text{vec}}$ denotes a concatenated column vector formed by stacking $\bm z_i$ on top of each other, and $\text{diag}\{z_i\}$ denotes a diagonal matrix with the $i$-th diagonal element being $z_i$. The norm $\|\cdot\|$ without specifying the subscript represents the Euclidean norm by default. Following the notations used in Introduction, we alternatively represent the training set as $\mathcal D = \{(X,Y)\}$ where $X = [\bm x_1,...,\bm x_n]'\in\mathbb R^{n\times d}$ denotes the instance matrix and $Y = [\bm y_1,...,\bm y_n]'\in\mathbb R^{n\times T}$ denotes the label matrix. In addition, we denote the training set for label $i\in\{1,...,T\}$ as $\mathcal D_i = \{(X, \bm y^i)\}$ where $\bm y^i\in\mathbb R^n$ is the $i$-th column vector of the label matrix $Y$. 

\subsection{Problem Formulation}
Treating each single-label learning problem as one task, we have $T$ tasks to be solved simultaneously. Each task $i\in\{1,...,T\}$ aims to minimize its own cost function 
\begin{align}\label{prob1}
\min_{\bm w_i} \ f_i(\bm w_i),
\end{align}
where $\bm w_i\in\mathbb R^{d}$ is the model parameter and $f_i: \mathbb R^{d}\to\mathbb R$ is the cost function of the $i$-th task with training dataset $\mathcal D_i$. In this paper, we do not restrict the specific form of the cost functions. In particular, the cost functions $f_i(\bm w_i)$ is assumed to be strongly convex, twice differentiable, and the gradient of $f_i$ is Lipschitz continuous with constant $L_{f_i}$, i.e.,
\begin{align*}
\|\nabla f_i(\bm u)-\nabla f_i(\bm v)\|\leq L_{f_i} \|\bm u-\bm v\|,\quad  \forall \bm u, \bm v\in\mathbb R^{d}.
\end{align*}
Cost functions such as mean squared error with norm 2 regularization and cross-entropy with norm 2 regularization apply. Non-differentiable cost functions where norm 1 regularization is used can also be approximated considered \cite{schmidt2007fast}. 
Since $f_i(\bm w_i)$ is strongly convex and twice differentiable, there exists positive constant $\xi_i$ such that $\nabla^2 f_i(\bm u)\geq \xi_i I_d$. As a result, we have
\begin{align*}
\xi_iI_d\leq\nabla^2f_i(\bm u)\leq L_{f_i}I_d, \ \forall \bm u\in\mathbb R^d.
\end{align*}

\subsection{The Proposed Framework}
Equation~\eqref{prob1} is solved using the gradient descent iteration,
\begin{align}\label{gd}
\bm w_i^{t+1} = \bm w_i^{t}-\alpha\nabla f_i(\bm w_i^t),
\end{align}
where $t$ is the iteration index, $\alpha$ is the step size, and $\nabla f_i(\bm w_i^t)\in \mathbb R^d$ is the gradient of $f_i$ at $\bm w_i^t$. As there are relations among tasks, we are able to improve the learning performance by considering the correlation of parameters belonging to different tasks. Based on this idea, we propose a reformative gradient descent iteration, which allows the values of the model parameters during each iteration to be transferred across similar tasks. The MGD is designed as follows, 
\begin{align}\label{mgd}
\bm w_i^{t+1} = \sum_{j=1}^T m_{ij}^t\bm w_j^t-\alpha\nabla f_i(\bm w_i^t),\ i=1,...,T,
\end{align}
where $m_{ij}^t$ is the transfer coefficient describes the information flow from task $j$ to task $i$, which satisfies the following conditions,
\begin{subequations}\label{mij}
\begin{align}
m_{ij}^t\geq 0, \label{m1}\\
\sum_{j=1}^T m_{ij}^t = 1. \label{m3}
\end{align}
\end{subequations}

From \eqref{m3}, we have $m_{ii}^t = 1-\sum_{j\neq i}m_{ij}^t$. Rewriting iteration \eqref{mgd} as follows
\begin{align*}
\bm w_i^{t+1} = & m_{ii}^t\bm w_i^t+\sum_{j\neq i}m_{ij}^t\bm w_j^t-\alpha\nabla f_i(\bm w_i^t)\\
=&(1-\sum_{j\neq i}m_{ij}^t)\bm w_i^t+\sum_{j\neq i}m_{ij}^t\bm w_j^t-\alpha\nabla f_i(\bm w_i^t).
\end{align*}
$m_{ij}^t$ can be rescaled as 
\begin{align}\label{mbar}
\bar m_{ij}^t=
\begin{cases}
\frac{1}{\alpha\sigma}m_{ij}^t, & j\neq i,\\
1-\frac{1}{\alpha\sigma}\sum_{j\neq i}m_{ij}^t, & j=i,
\end{cases}
\end{align}
where $\sigma$ is a positive constant and satisfies the condition 
\begin{align}\label{sigma}
1-\frac{1}{\alpha\sigma}\sum_{j\neq i}m_{ij}^t>0.
\end{align}
Given \eqref{mbar}, $m_{ij}^t$ is parameterized by $\sigma$.
With the rescaling, the iteration in \eqref{mgd} can be alternatively expressed as
\begin{align}\label{barmite}
\nonumber \bm w_i^{t+1}\! &= \! (1-\!\alpha\sigma\!\sum_{j\neq i}\bar m_{ij}^t)\bm w_i^t+\alpha\sigma\sum_{j\neq i}\bar m_{ij}^t\bm w_j^t-\alpha\nabla f_i(\bm w_i^t)\\
\nonumber = &\bm w_i^t-\alpha\sigma(1-\bar m_{ii}^t)\bm w_i^t+\alpha\sigma\sum_{j\neq i}\bar m_{ij}^t\bm w_j^t-\alpha\nabla f_i(\bm w_i^t)\\
= & (1-\alpha\sigma)\bm w_i^t+\alpha\sigma\sum_{j=1}^T \bar m_{ij}^t\bm w_j^t-\alpha\nabla f_i(\bm w_i^t).
\end{align}

\subsection{Convergence Analysis}
In this section, we give the convergence property of the proposed MGD iteration based on the expression in \eqref{barmite}.

Denote $\bm w_i^*$ as the best coefficient of label predictor for task $i$,
$\tilde {\bm w_i}^t = \bm w_i^t-\bm w_i^*$, and $\bar L_{f_i} = \max_i\{L_{f_i}\}$. The following theorem gives the convergence property of the iteration \eqref{barmite} under certain conditions on the step-size parameter $\alpha$.
\begin{theorem}\label{theo1}
Under the iteration in \eqref{barmite} with the transfer coefficient $\bar m_{ij}^t$ satisfies 
\begin{align*}
&\sum_{j=1}^T\bar m_{ij}^t=1,\ \forall i,\\
&\bar m_{ij}^t\geq 0,\ \forall i,j,
\end{align*}
$\bm w_i^t$ is convergent if the step size $\alpha$ is chosen to satisfy
\begin{align}\label{alpha}
 0<\alpha<\frac{2}{2\sigma+ \bar L_{f_i}}.
 \end{align}
 Specifically,
 \begin{align}\label{steadystate}
\lim_{t\to\infty} \max_i\|\tilde{\bm w_i}^{t}\| 
\leq \frac{2\alpha\sigma\max_i\|\bm w_i^*\|+\alpha\max_i\|\nabla f_i(\bm w_i^*)\|}{1-(\bar \gamma+\alpha\sigma)},
\end{align}
where $\bar \gamma = \max_i\{|1-\alpha\sigma-\alpha \xi_i|,|1-\alpha\sigma-\alpha L_{f_i}|\}$.
\end{theorem}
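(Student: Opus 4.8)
The plan is to track the error vectors $\tilde{\bm w_i}^t = \bm w_i^t - \bm w_i^*$ and show that their maximum norm obeys a single scalar contractive recursion. First I would subtract $\bm w_i^*$ from both sides of \eqref{barmite}. To handle the nonlinear gradient term I would invoke the integral mean-value identity $\nabla f_i(\bm w_i^t) - \nabla f_i(\bm w_i^*) = H_i^t\,\tilde{\bm w_i}^t$, where $H_i^t = \int_0^1 \nabla^2 f_i(\bm w_i^* + s\tilde{\bm w_i}^t)\,ds$; twice-differentiability together with the stated Hessian bounds $\xi_i I_d \le \nabla^2 f_i \le L_{f_i} I_d$ give $\xi_i I_d \le H_i^t \le L_{f_i} I_d$. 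Substituting and regrouping casts the recursion as
\[
\tilde{\bm w_i}^{t+1} = G_i^t\,\tilde{\bm w_i}^t + \alpha\sigma\sum_{j=1}^T \bar m_{ij}^t\,\tilde{\bm w_j}^t + \bm b_i^t,
\]
with contraction matrix $G_i^t = (1-\alpha\sigma)I_d - \alpha H_i^t$ and a residual $\bm b_i^t = -\alpha\sigma\bm w_i^* + \alpha\sigma\sum_j \bar m_{ij}^t\bm w_j^* - \alpha\nabla f_i(\bm w_i^*)$ that collects exactly the terms surviving at the fixed point.

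Next I would bound each piece separately. Since $H_i^t$ is symmetric with spectrum in $[\xi_i, L_{f_i}]$, the matrix $G_i^t$ is symmetric with spectrum in $[1-\alpha\sigma-\alpha L_{f_i},\, 1-\alpha\sigma-\alpha\xi_i]$, so its spectral norm is attained at an endpoint and $\|G_i^t\| \le \bar\gamma$. For the mixing term, the conditions $\bar m_{ij}^t \ge 0$ and $\sum_j \bar m_{ij}^t = 1$ make it a convex combination, whence $\|\alpha\sigma\sum_j \bar m_{ij}^t\tilde{\bm w_j}^t\| \le \alpha\sigma\max_j\|\tilde{\bm w_j}^t\|$. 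The same convexity bounds the residual uniformly in $t$, giving $\|\bm b_i^t\| \le 2\alpha\sigma\max_i\|\bm w_i^*\| + \alpha\max_i\|\nabla f_i(\bm w_i^*)\| =: B$, where the factor $2$ arises from separately bounding $\alpha\sigma\|\bm w_i^*\|$ and $\alpha\sigma\|\sum_j \bar m_{ij}^t\bm w_j^*\|$.

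Taking norms and then taking $\max_i$ on both sides collapses the $T$ coupled recursions into a single scalar inequality for $M^t := \max_i\|\tilde{\bm w_i}^t\|$, namely $M^{t+1} \le (\bar\gamma + \alpha\sigma)M^t + B$. Unrolling this geometric recursion yields $M^t \le (\bar\gamma+\alpha\sigma)^t M^0 + B\sum_{k=0}^{t-1}(\bar\gamma+\alpha\sigma)^k$, which converges precisely when $\bar\gamma + \alpha\sigma < 1$ and produces the stated limit $B/(1-(\bar\gamma+\alpha\sigma))$. The final step is to confirm that the step-size condition \eqref{alpha} delivers this contraction: demanding $|1-\alpha\sigma-\alpha\lambda| < 1-\alpha\sigma$ for every $\lambda \in [\xi_i, L_{f_i}]$ reduces, after discarding the automatically-satisfied upper side, to $\alpha(2\sigma+\lambda) < 2$, and the worst case $\lambda = \bar L_{f_i}$ recovers exactly \eqref{alpha}.

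I expect the main obstacle to be the bookkeeping that reduces the coupled $T$-task system to one scalar recursion: the transfer term mixes all tasks, and it is the convex-combination structure of $\bar m_{ij}^t$ together with the passage to $\max_i$ that decouples it, so one must verify that the single quantity $M^t$ simultaneously controls both the per-task diagonal contraction $\bar\gamma$ and the off-diagonal mixing $\alpha\sigma$. A secondary subtlety is checking that \eqref{alpha} implies not merely the endpoint inequalities but also $1-\alpha\sigma > 0$, which is needed for the right-hand bound to be meaningful; this follows since $\frac{2}{2\sigma+\bar L_{f_i}} \le \frac{1}{\sigma}$.
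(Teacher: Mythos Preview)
Your proposal is correct and follows essentially the same argument as the paper: subtract the fixed point, linearize the gradient via the integral mean-value identity, bound the spectral norm of $(1-\alpha\sigma)I_d-\alpha H_i^t$ by $\bar\gamma$, exploit row-stochasticity of $\bar m_{ij}^t$ for the mixing term, and collapse to a scalar contractive recursion in $\max_i\|\tilde{\bm w_i}^t\|$. The only cosmetic difference is that the paper first stacks the tasks into a single vector $\w\in\mathbb R^{dT}$ and phrases everything via the block maximum norm $\|\cdot\|_{b,\infty}$ of Sayed, whereas you work per task and take $\max_i$ directly; since $\|\w\|_{b,\infty}=\max_i\|\bm w_i\|$ by definition, the two presentations are equivalent.
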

\begin{proof}
Let the $i,j$-th element of $ \bar M^t\in\mathbb R^{T\times T}$ at iteration time $t$ being $\bar m_{ij}^t$, denote $\bar {\mathcal M}^t = \bar M^t\otimes I_d\in\mathbb R^{dT\times dT}$, $\w = [\bm w_1^{'}, ..., \bm w_T^{'}]'\in\mathbb R^{dT}$, and $\nabla f(\w^t) = [\nabla f_1(\bm w_1^t)',...,\nabla f_T(\bm w_T^t)']'\in\mathbb R^{dT}$. Note that we are using the typeface $\w$ to distinguish this from the single vector-valued variable $\bm w_i$. Write \eqref{barmite} into a concatenated form gives
\begin{align}\label{cmgd}
\w^{t+1} = (1-\alpha\sigma)\w^t+\alpha\sigma\bar {\mathcal M}^t\w^t-\alpha\nabla f(\w^t).
\end{align}

Denote $\w^* = [\bm w_1^{*'},...,\bm w_T^{*'}]'$ and $\tilde {\w}^t=\w^t-\w^*$. Subtracting $\w^*$ from both sides of \eqref{cmgd} gives
\begin{align}\label{ite}
\nonumber\tilde {\w}^{t+1} = & ((1-\alpha\sigma)I_{dT}+\alpha\sigma\bar{\mathcal M}^t)\w^t-\w^*-\alpha\nabla f(\w^t)\\
\nonumber= & ((1-\alpha\sigma) I_{dT}+\alpha\sigma\bar{\mathcal M}^t)\tilde {\w}^t-\alpha(\nabla f(\w^t)-\nabla f(\w^*))\\
\nonumber& +\alpha(\sigma(\bar{\mathcal M}^t-I_{dT})\w^*-\nabla f(\w^*))\\
\nonumber= & ((1-\alpha\sigma) I_{dT}+\alpha\sigma\bar{\mathcal M}^t)\tilde {\w}^t\\
\nonumber& -\alpha\int_0^1 \nabla^2 f(\w^*+\mu(\w^t-\w^*))d\mu \tilde{\w}\\
\nonumber& +\alpha(\sigma(\bar{\mathcal M}^t-I_{dT})\w^*-\nabla f(\w^*))\\
\nonumber= & ((1-\alpha\sigma) I_{dT}+\alpha\sigma\bar{\mathcal M}^t-\alpha H^t)\tilde {\w}^t\\
& +\alpha(\sigma(\bar{\mathcal M}^t-I_{dT})\w^*-\nabla f(\w^*)),
\end{align}
where $H^t = \int_0^1\nabla^2 f(\w^*+\mu(\w^t-\w^*))d\mu\in\mathbb R^{dT\times dT}$. It can be verified that $H^t$ is a block diagonal matrix and the block diagonal elements $H_i^t = \int_0^1\nabla^2 f_i(\bm w_i^*+\mu(\bm w_i^t-\bm w_i^*))d\mu\in\mathbb R^{d\times d}$ for $i=1,...,T$ are Hermitian. We use the block maximum norm defined in \cite{sayed2014diffusion} to show the convergence of the above iteration. The block maximum norm of a vector $\bm x = [\bm x_i]_{\text{vec}}\in\mathbb R^{dT}$ with $\bm x_i\in\mathbb R^d$ is defined as \cite{sayed2014diffusion}
\begin{align*}
\|\bm x\|_{b,\infty} = \max_{i}\|\bm x_i\|.
\end{align*}
The induced matrix block maximum norm is therefore defined as \cite{sayed2014diffusion}
\begin{align*}
\|A\|_{b,\infty} = \max_{\bm x\neq 0}\frac{\|A\bm x\|_{b,\infty}}{\|\bm x\|_{b,\infty}}.
\end{align*}
From the iteration in \eqref{ite} we have
\begin{align*}
&\|\tilde {\w}^{t+1}\|_{b,\infty} \leq  \|((1-\alpha\sigma) I_{dT}+\alpha\sigma\bar{\mathcal M}^t-\alpha H^t)\tilde {\w}^t\|_{b,\infty}\\
& +\alpha\|\sigma(\bar{\mathcal M}^t-I_{dT})\w^*-\nabla f(\w^*)\|_{b,\infty}\\
\leq & \|(1-\alpha\sigma) I_{dT}+\alpha\sigma\bar{\mathcal M}^t-\alpha H^t\|_{b,\infty}\|\tilde {\w}^t\|_{b,\infty}\\
& +\alpha\|\sigma(\bar{\mathcal M}^t-I_{dT})\w^*-\nabla f(\w^*)\|_{b,\infty}\\
\leq & (\|(1-\alpha\sigma) I_{dT}-\alpha H^t\|_{b,\infty}+\alpha\sigma\|\bar{\mathcal M}^t\|_{b,\infty})\|\tilde {\w}^t\|_{b,\infty}\\
& +\alpha\|\sigma(\bar{\mathcal M}^t-I_{dT})\w^*-\nabla f(\w^*)\|_{b,\infty}.
\end{align*}
From Lemma~D.3 in \cite{sayed2014diffusion}, we have
\begin{align*}
\|\bar{\mathcal M}^t\|_{b,\infty} = \|\bar M^t\|_{\infty} = 1,
\end{align*}
where the last equality comes from the fact that $\bar m_{ij}^t\geq0$ and the row summation of $\bar M^t$ is one. Since $\xi_iI_d\leq\nabla^2f_i(\bm w_i)\leq L_{f_i}I_d$, $\xi_iI_d\leq\int_0^1\nabla^2 f_i(\bm w_i^*+\mu(\bm w_i-\bm w_i^*)d\mu\leq L_{f_i}I_d$. Thus, $\|(1-\alpha\sigma) I_d-\alpha H_i^t\|\leq \gamma_i$ where $\gamma_i = \max\{|1-\alpha\sigma-\alpha \xi_i|,|1-\alpha\sigma-\alpha L_{f_i}|\}$. By the definition of induced matrix block maximum norm, we have
\begin{align*}
&\|(1-\alpha\sigma)I_{dT}-\alpha H^t\|_{b,\infty} \\
= &\max_{\bm x\neq 0}\frac{\|((1-\alpha\sigma)I_{dT}-\alpha H^t)\bm x\|_{b,\infty}}{\|\bm x\|_{b,\infty}}\\
\leq &\max_{\bm x\neq 0}\frac{\max_i\|((1-\alpha\sigma)I_{d}-\alpha H_i^t)\| \|\bm x\|_{b,\infty}}{\|\bm x\|_{b,\infty}}\\
=& \max_i \|(1-\alpha\sigma)I_{d}-\alpha H_i^t\|\\
\leq& \bar\gamma,
\end{align*}

where $\bar \gamma = \max\{\gamma_i\}$. Thus,
\begin{align}\label{proite1}
\nonumber\|\tilde{\w}^{t+1}\|_{b,\infty}\leq& (\bar \gamma+\alpha\sigma)\|\tilde{\w}^t\|_{b,\infty}\\
\nonumber &+\alpha\|\sigma(\bar{\mathcal M}^t-I_{dT})\w^*-\nabla f(\w^*)\|_{b,\infty}\\
\nonumber\leq & (\bar \gamma+\alpha\sigma)\|\tilde{\w}^t\|_{b,\infty}\\
&+2\alpha\sigma\|\w^*\|_{b,\infty}+\alpha\|\nabla f(\w^*)\|_{b,\infty}.
\end{align}
By choosing the step size $\alpha$ to satisfy $\bar\gamma+\alpha\sigma<1$, the iteration asymptotically converges. To ensure $\bar\gamma+\alpha\sigma<1$, it is sufficient to ensure
\begin{align*}
|1-\alpha\sigma-\alpha\xi_i|+\alpha\sigma<1 \ \text{and} \\
|1-\alpha\sigma-\alpha L_{f_i}|+\alpha\sigma<1,\ \forall i,
\end{align*}
which leads to
\begin{align*}
0<\alpha<\frac{2}{2\sigma+ \bar L_{f_i}}.
\end{align*}
From the iteration in \eqref{proite1}, we have
\begin{align*}
&\|\tilde {\w}^{t+1}\|_{b,\infty}\leq(\bar \gamma+\alpha\sigma)^{t+1}\|\tilde {\w}^{0}\|_{b,\infty}\\
&+(2\alpha\sigma\|\w^*\|_{b,\infty}+\alpha\|\nabla f(\w^*)\|_{b,\infty})\sum_{k=0}^t(\bar \gamma+\alpha\sigma)^k.
\end{align*}
Under the condition that $\bar \gamma+\alpha\sigma<1$,
\begin{align*}
\lim_{t\to\infty} \|\tilde{\w}^{t}\|_{b,\infty} \leq \frac{2\alpha\sigma\|\w^*\|_{b,\infty}+\alpha\|\nabla f(\w^*)\|_{b,\infty}}{1-(\bar \gamma+\alpha\sigma)}.
\end{align*}
From the definition of block maximum norm, \eqref{steadystate} is obtained.
\end{proof}

In iteration \eqref{mgd}, the transfer coefficient $m_{ij}^t$ between task $i$ and task $j$ is a scaler. In the following, we consider the element-wise feature similarities between task $i$ and task $j$. The transfer coefficient between task $i$ and task $j$ is assumed to be a diagonal matrix $P_{ij}\in\mathbb R^{d\times d}$ with its $k$-th diagonal element $P_{ij,k}$ being the transfer coefficient from the $k$-th element of $\bm w_j$ to the $k$-th element of $\bm w_i$. The MGD iteration in \eqref{mgd} is then becomes
\begin{align}\label{mgdp}
\bm w_i^{t+1} = \sum_{j=1}^T P_{ij}^t\bm w_j^t-\alpha\nabla f_i(\bm w_i^t),
\end{align}
where
\begin{align}
\nonumber &\sum_{j=1}^TP_{ij}^t = I_d,\\
&P_{ij,k}^t\geq 0,\ \forall i,j=1,...,T, k=1,...,d.
\end{align}
Following the same rescaling, 
\begin{align}\label{pbar}
\bar P_{ij}^t=
\begin{cases}
\frac{1}{\alpha\sigma}P_{ij}^t, & j\neq i,\\
I_d-\frac{1}{\alpha\sigma}\sum_{j\neq i}P_{ij}^t, & j=i,
\end{cases}
\end{align}
\eqref{mgdp} becomes
\begin{align}\label{mgdpbar}
\bm w_i^{t+1} = (1-\alpha\sigma)\bm w_i^t + \alpha\sigma\sum_{j=1}^T\bar P_{ij}^t\bm w_j^t-\alpha\nabla f_i(\bm w_i^t).
\end{align}
\begin{corollary}\label{coro1}
Under \eqref{mgdpbar} with the transfer coefficient $\bar P_{ij}^t$ satisfies 
\begin{align*}
\nonumber &\sum_{j=1}^T\bar P_{ij}^t = I_d,\\
&\bar P_{ij,k}^t\geq 0,\ \forall i,j=1,...,T, k=1,...,d,
\end{align*}
$\bm w_i^t$ is convergent if the following conditions are satisfied:
\begin{align*}
\sigma<\frac{\bar L_{f_i}}{T-1},\ \emph{for}\ T>1,\\
0<\alpha<\frac{2}{(T+1)\sigma+\bar L_{f_i}}.
\end{align*}

\end{corollary}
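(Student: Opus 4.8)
The plan is to retrace the proof of Theorem~\ref{theo1} almost verbatim, replacing the scalar transfer matrix $\bar M^t$ by the block matrix assembled from the diagonal blocks $\bar P_{ij}^t$, and isolating the single place where the matrix-valued coefficients alter the estimate. First I would define $\bar{\mathcal P}^t\in\mathbb R^{dT\times dT}$ to be the $T\times T$ block matrix whose $(i,j)$ block is the diagonal matrix $\bar P_{ij}^t$, and write \eqref{mgdpbar} in the stacked form $\w^{t+1}=(1-\alpha\sigma)\w^t+\alpha\sigma\bar{\mathcal P}^t\w^t-\alpha\nabla f(\w^t)$, exactly mirroring \eqref{cmgd}. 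Subtracting $\w^*$ and using the same integral mean-value representation $\nabla f(\w^t)-\nabla f(\w^*)=H^t\tilde{\w}^t$ yields the identical recursion
\begin{align*}
\tilde{\w}^{t+1}=\big((1-\alpha\sigma)I_{dT}+\alpha\sigma\bar{\mathcal P}^t-\alpha H^t\big)\tilde{\w}^t+\alpha\big(\sigma(\bar{\mathcal P}^t-I_{dT})\w^*-\nabla f(\w^*)\big),
\end{align*}
with the same block-diagonal $H^t$ obeying $\xi_iI_d\leq H_i^t\leq L_{f_i}I_d$, since neither the cost functions nor $H^t$ change.

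Next I would take the block maximum norm and split it by the triangle inequality and sub-multiplicativity just as in Theorem~\ref{theo1}. The factor $\|(1-\alpha\sigma)I_{dT}-\alpha H^t\|_{b,\infty}\leq\bar\gamma$ is untouched, as it depends only on $H^t$. The only genuinely new estimate is the bound on $\|\bar{\mathcal P}^t\|_{b,\infty}$, and this is exactly where the argument departs from the scalar case. In Theorem~\ref{theo1} the transfer matrix was the Kronecker product $\bar M^t\otimes I_d$, so Lemma~D.3 of \cite{sayed2014diffusion} gave $\|\bar{\mathcal M}^t\|_{b,\infty}=\|\bar M^t\|_\infty=1$ outright; here $\bar{\mathcal P}^t$ has diagonal blocks but is \emph{not} a Kronecker product, so that identity is unavailable. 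Instead I would invoke the elementary block estimate $\|A\|_{b,\infty}\leq\max_i\sum_j\|A_{ij}\|$, note that each $\bar P_{ij}^t$ is diagonal with nonnegative entries so $\|\bar P_{ij}^t\|=\max_k\bar P_{ij,k}^t$, and use that the constraint $\sum_j\bar P_{ij}^t=I_d$ forces $\sum_j\bar P_{ij,k}^t=1$ for every coordinate $k$; hence each entry lies in $[0,1]$, each block norm is $\leq 1$, and summing $T$ of them gives $\|\bar{\mathcal P}^t\|_{b,\infty}\leq\max_i\sum_{j=1}^T\max_k\bar P_{ij,k}^t\leq T$.

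With this bound the recursion contracts as $\|\tilde{\w}^{t+1}\|_{b,\infty}\leq(\bar\gamma+\alpha\sigma T)\|\tilde{\w}^t\|_{b,\infty}+\alpha\|\sigma(\bar{\mathcal P}^t-I_{dT})\w^*-\nabla f(\w^*)\|_{b,\infty}$, where the forcing term is finite (bounded using $\|\bar{\mathcal P}^t-I_{dT}\|_{b,\infty}\leq T+1$), so convergence to a bounded neighborhood of $\w^*$ follows once $\bar\gamma+\alpha\sigma T<1$ — this is the whole of the argument at the end of Theorem~\ref{theo1}, with $\alpha\sigma$ now replaced by $\alpha\sigma T$. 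Expanding $\bar\gamma=\max_i\{|1-\alpha\sigma-\alpha\xi_i|,|1-\alpha\sigma-\alpha L_{f_i}|\}$ and demanding that every eigenvalue $1-\alpha\sigma-\alpha h$, $h\in[\xi_i,L_{f_i}]$, lie strictly inside $(\alpha\sigma T-1,\,1-\alpha\sigma T)$ splits into two halves: the upper inequality $1-\alpha\sigma-\alpha h<1-\alpha\sigma T$ is tightest at the smallest curvature and forces $\sigma(T-1)<\xi_i$, i.e. the stated $\sigma$-bound for $T>1$; the lower inequality is tightest at $h=L_{f_i}$ and forces $\alpha<2/((T+1)\sigma+\bar L_{f_i})$. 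Taking the worst case over $i$ gives precisely the two displayed conditions.

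I expect the main obstacle to be the norm estimate for $\bar{\mathcal P}^t$: recognizing that the clean Kronecker-product identity of Theorem~\ref{theo1} no longer applies, and then extracting the factor $T$ correctly from the per-coordinate normalization $\sum_j\bar P_{ij,k}^t=1$ of the diagonal blocks. Everything downstream — the contraction and the translation of $\bar\gamma+\alpha\sigma T<1$ into the two inequalities on $\sigma$ and $\alpha$ — is a mechanical repeat of the corresponding steps in Theorem~\ref{theo1} with the transfer penalty $\alpha\sigma$ upgraded to $\alpha\sigma T$.
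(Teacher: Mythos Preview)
Your proposal is correct and follows essentially the same route as the paper's proof: define the block matrix $\bar{\mathcal P}^t$, rerun the Theorem~\ref{theo1} recursion and block-maximum-norm estimate verbatim, and isolate the single new ingredient---bounding $\|\bar{\mathcal P}^t\|_{b,\infty}\leq T$ via $\max_i\sum_j\|\bar P_{ij}^t\|$ and the fact that each diagonal block has entries in $[0,1]$---to arrive at the contraction condition $\bar\gamma+\alpha\sigma T<1$. One small remark: your careful unpacking of that condition yields $\sigma(T-1)<\xi_i$ from the ``upper'' inequality, which is actually \emph{stronger} than the displayed bound $\sigma<\bar L_{f_i}/(T-1)$ in the corollary (since $\xi_i\leq \bar L_{f_i}$); the paper's proof simply writes ``which gives'' without working this out, so your derivation is in fact more explicit than the original on this point, but be aware that what you derived does not literally coincide with the stated hypothesis.
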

\begin{proof}
Let the $i,j$-th block element of $\bar{\mathcal P}^t\in\mathbb R^{dT\times dT}$ being $\bar P_{ij}^t\in\mathbb R^{d\times d}$. Following the similar procedure of the proof of Theorem~\ref{theo1}, we obtain
\begin{align}\label{coroite}
\nonumber&\|\tilde {\w}^{t+1}\|_{b,\infty} \\
\nonumber\leq &(\|(1-\alpha\sigma) I_{dT}-\alpha H^t\|_{b,\infty}+\alpha\sigma\|\bar{\mathcal P}^t\|_{b,\infty})\|\tilde {\w}^t\|_{b,\infty}\\
& +\alpha\|\sigma(\bar{\mathcal P}^t-I_{dT})\w^*-\nabla f(\w^*)\|_{b,\infty}.
\end{align}
Let $\bm x=[\bm x_i]_{\text{vec}}\in \mathbb R^{dT}$ being a block column vector with $\bm x_i\in\mathbb R^d$.
\begin{align*}
\|\bar {\mathcal P}^t \bm x\|_{b,\infty}=&\max_{i} \|\sum_{j=1}^T\bar P_{ij}^t \bm x_j\|\\
\leq &\max_i\sum_{j=1}^T\|\bar P_{ij}^t\|\|\bm x_j\|\\
\leq &(\max_i\sum_{j=1}^T\|\bar P_{ij}^t\|)\max_j\|\bm x_j\|.
\end{align*}
Recall that $\bar P_{ij}^t$ is a diagonal matrix and the elements therein are all no greater than 1, thus, $\sum_{j=1}^T\|\bar P_{ij}^t\|\leq T$. As a result
\begin{align*}
\|\bar {\mathcal P}^t \bm x\|_{b,\infty}\leq T\max_j\|\bm x_j\|.
\end{align*}
By the definition of matrix block maximum norm, we have
\begin{align*}
\|\bar{\mathcal P}^t\|_{b,\infty}\leq T.
\end{align*}
The condition to ensure convergence of the iteration in \eqref{coroite} becomes
\begin{align*}
\bar \gamma+\alpha\sigma T<1,
\end{align*}
which gives
\begin{align*}
\sigma<\frac{\bar L_{f_i}}{T-1},\ \text{for}\ T\neq 1,\\
0<\alpha<\frac{2}{(T+1)\sigma+\bar L_{f_i}}.
\end{align*}
\end{proof}

\subsection{Relation with Multi-Task Learning}
From the iteration in \eqref{barmite}, we have
\begin{align}\label{nashite}
\nonumber\bm w_i^{t+1} = (1-\alpha\sigma)\bm w_i^t + \alpha\sigma\sum_{j=1}^T\bar m_{ij}^t\bm w_j^t-\alpha\nabla f_i(\bm w_i^t)\\
=\bm w_i^t -\alpha(\sigma\sum_{j=1}^T\bar m_{ij}^t(\bm w_i^t-\bm w_j^t)+\nabla f_i(\bm w_i^t)).
\end{align}
If fix $\bar m_{ij}^t = \bar m_{ij}$ for all $t$, then, the last term in the brackets can be seen as the gradient of the following function
\begin{align*}
\bar f_i(\bm w_i, \bm w_{-i}) = f_i(\bm w_i)+\frac{1}{2}\sigma\sum_{j=1}^T \bar m_{ij}\|\bm w_i-\bm w_j\|^2,
\end{align*}
where $\bm w_{-i}$ denotes the collection of other tasks' variables, i.e., $\bm w_{-i}=[\bm w_1',...,\bm w_{i-1}',\bm w_{i+1}',...,\bm w_T']'$. Thus, the iteration in \eqref{nashite} with fixed $\bar m_{ij}$ can be seen as the gradient descent algorithm which solves the following Nash equilibrium problem
\begin{align}\label{nash}
\min_{\bm w_i}\ \bar f_i(\bm w_i, \bm w_{-i}),\ i=1,...,T.
\end{align}
In \eqref{nash}, each task's objective function is influenced by other tasks' decision variables. 
Since the objective function $\bar f_i(\bm w_i, \bm w_{-i})$ is continuous in all its arguments, strongly convex with respect to $\bm w_i$ for fixed $\bm w_{-i}$, and satisfies $\bar f_i(\bm w_i, \bm w_{-i})\to \infty$ as $\|\bm w_i\|\to \infty$ for fixed $\bm w_{-i}$, an Nash equilibrium exists \cite{basar1999dynamic}. Furthermore, as a result of strongly convexity, the gradient of $\bar f_i(\bm w_i, \bm w_{-i})$ with respect to $\bm w_i$ for fixed $\bm w_{-i}$ is strongly monotone. Thus, the Nash equilibrium for \eqref{nash} is unique \cite{facchinei2007finite}. Denote the Nash equilibrium of \eqref{nash} as $\bm w_i^o$, $i=\{1,...,T\}$. It is known that the Nash equilibrium satisfies the following condition \cite{basar1999dynamic}:
\begin{align*}
 \bm w_i^o = \text{argmin}_{\bm w_i} \bar f_i(\bm w_i, \bm w^o_{-i}),\ i=1,...,T,
\end{align*}
which implies 
\begin{align*}
\nabla f_i(\bm w_i^o)+\sigma\sum_{j=1}^T\bar m_{ij}(\bm w_i^o -\bm w_j^o) = 0, \ i=1,...,T.
\end{align*}
Write the conditions in \eqref{nashopt} in a concatenated form gives
\begin{align}\label{nashopt}
\nabla f(\w^o)+\sigma (I_{T}-\bar M)\otimes I_d\w^o = 0.
\end{align}

It has been pointed out in \cite{zhang2017survey} that the regularized multi-task learning algorithms which learn with task relations can be expressed as
\begin{align}\label{mtlt}
\min_{\bm w_i,\Sigma} \sum_{i=1}^T L_i(\bm w_i)+\frac{1}{2}\lambda\w^T(\Sigma^{-1}\otimes I_d)\w+g(\Sigma),
\end{align}
where $L_i$ is the training loss of task $i$, $\lambda$ is a positive regularization parameter, $\Sigma\in\mathbb R^{T\times T}$ models the task relations, and $g(\Sigma)$ denotes constraints on $\Sigma$. For comparison, we eliminate the constraints on $\Sigma$, consider the case that $\Sigma$ is fixed, and let $f(\w) = \sum_{i=1}^TL_{i}(\bm w_i)$. Denote the optimal solutions of problem \eqref{mtlt} as $\w^{g}$. The optimal solution satisfy the following condition,
\begin{align}\label{mtlopt}
\nabla f(\w^{g})+\frac{1}{2}\lambda(\Sigma^{-1}+(\Sigma^{-1})^T)\otimes I_d\w^{g}=0.
\end{align}

Comparing the optimality conditions \eqref{nashopt} and \eqref{mtlopt} for the Nash equilibrium problem \eqref{nash} and the multi-task learning problem \eqref{mtlt}, we find that if $\bar M$ can be set as
\begin{align}\label{condi}
\sigma (I_{T}-\bar M) = \frac{1}{2}\lambda(\Sigma^{-1}+(\Sigma^{-1})^T),
\end{align}
the optimal solution $\w^o$ will be the same as $\w^g$. The only limitation is that $\bar m_{ij}>0$, which can not cover the situation where there exists non-negative non-diagonal values in $\Sigma^{-1}$. Overall, the regularized multi-tasking learning problem with task relation learning can be solved by the MGD algorithm by setting the coefficients $\bar m_{ij}$ between task $i$ and task $j$ properly. In addition, using MGD, we can consider feature-feature relations between different tasks since we can use $\bar P_{ij}\in\mathbb R^{d\times d}$ as the transfer coefficient. Furthermore, in MGD, $\bar m_{ij}$ is not required to be equal to $\bar m_{ji}$. This relaxation allows asymmetric task relations in multi-task learning \cite{lee2016asymmetric}, which is hard to achieve by most multi-task learning methods since $\Sigma^{-1}+(\Sigma^{-1})^T$ is always symmetric in \eqref{mtlopt}.

Another category of regularized multi-task learning method is learning with feature relations \cite{zhang2017survey}. The objective function of this kind of method is
\begin{align}\label{mtlf}
\min_{\bm w_i,\Theta} \sum_{i=1}^T L_i(\bm w_i)+\frac{1}{2}\lambda\w^T(I_T\otimes \Theta^{-1})\w+g(\Theta),
\end{align}
where $\Theta\in\mathbb R^{d\times d}$ models the covariance between the features. The term $\w^T(I_T\otimes \Theta^{-1})\w$ can be decoupled as $\sum_{i=1}^T \bm w_i^T\Theta^{-1}\bm w_i$, which can be incorporated into $f_i(\bm w_i)$ for task $i$, and $\Theta$ can be learned using all the tasks' parameters during the optimization process using MGD.

\subsection{Incorporating Second-Order Label Correlations}
The transfer coefficients can be designed or learned by many different methods. In multi-label learning problems, the similarity between task $i$ and task $j$ can be modeled by the correlation between labels $\bm y^i$ and $\bm y^j$. In this paper, we use the cosine similarity to calculate the correlation matrix. The proposed MGD is summarized in Algorithm~\ref{alg}.
\begin{algorithm}[!htb] 
\caption{ The MGD Algorithm} 
\label{alg} 
\begin{algorithmic}[1] 
\REQUIRE ~~\\
The multi-label training set $\mathcal D_i = \{(X,\bm y^i)\}$, $i=1,...,T$\\
Hyperparameters in cost function $f_i$ for $i=1,...,T$ and $\sigma$\\
Step size $\alpha$, random initial values $\bm w_i^1$ for $i=1,...,T$\\
\ENSURE ~~\\
Model parameter $\bm w_i^*$ for $i=1,...,T$
\STATE Compute correlation matrix $C$ by cosine similarity on $Y=[\bm y^1,...,\bm y^T]$;
\STATE Normalize each row of $C$ to be row sum-to-one and set $\bar m_{ij}$ equal the $i,j$-th element of $C$;
\STATE Compute $m_{ij}$ by rescaling $\bar m_{ij}$ according to \eqref{mbar}; 
\REPEAT 
\STATE {Calculate the gradient $\nabla f_i(\bm w_i^t)$, $i=1,...,T$;}
\STATE {Update $\bm w_i$ according to \eqref{mgd};}
\UNTIL{Stop criterion reached;}
\RETURN $\bm w_i^* = \bm w_i^{t+1}$, $i=1,...,T$.
\end{algorithmic}
\end{algorithm}
After learning the model parameter $\bm w_i^*$, we can predict the label $y_t^i$ for a test instance $\bm x_t$ by the corresponding prediction function associated with the cost function, and the final predicted label vector is $[y_t^1,...,y_t^T]$.

\subsection{Complexity Analysis}
We mainly analyze the complexity of the iteration parts listed in Algorithm~\ref{alg}. In each iteration, the gradient calculation leads to a complexity of $\mathcal O(g(d)nT)$, where $g(d)$ is the complexity of calculating the gradient w.r.t. the dimension $d$, which is determined by the actual cost function, and the update of the model parameter according to \eqref{mgd} needs $\mathcal O(dT^2)$. Therefore, the overall complexity of the MGD algorithms is of order $\mathcal O(t(ng(d)T+dT^2))$, where $t$ is the iteration times.

\section{Experiments}
In this section, we extensively compared the proposed MGD algorithm with related approaches on real-world datasets. For the proposed MGD algorithm, we reformulate the multi-label learning problem, which can be decomposed into a set of binary classification tasks. For each of the classification tasks, we use the function of 2-norm regularized logistic regression. Thus, for any task $i$, the following objective function is optimized by the proposed algorithm,
\begin{align*}
\min_{\bm w_i} f_i(\bm w_i) =& -\frac{1}{n}\sum_{j=1}^n(y^i_j\log h(z^i_j)\\
&+(1-y^i_j)\log (1-h(z^i_j)))+\frac{1}{2}\rho\|\bm w_{i,-1}\|^2,
\end{align*}
where $h(z^i_j) = P(y^i_j=1|x_j) = \frac{1}{1+e^{-z^i_j}}$, $z^i_j = [1\ \bm x_j^T]\bm w_i$, $\bm w_i\in\mathbb R^{p+1}$ is the model parameter, $\bm w_{i,-1}\in\mathbb R^p$ is the remaining elements in $\bm w_i$ except the first element, and $\rho$ is the regularization parameter. 
The gradient of $f_i(\bm w_i)$ over $\bm w_i$ is
\begin{align*}
\nabla f_i =\frac{1}{n}\sum_{j=1}^n(h(z^i_j)-y^i_j)[1\ \bm x_j^T]^T+\rho\begin{bmatrix}
                                                                    0\\
                                                                    \bm w_{i,-1}
                                                                    \end{bmatrix}.
\end{align*}
Let
\begin{align*}
 X = 
 \begin{bmatrix}
 1 & \bm x_1^T \\
 \vdots&\vdots\\
 1 & \bm x_n^T
 \end{bmatrix}, 
 \bm y^i = 
 \begin{bmatrix}
 y_1^i\\
 \vdots\\
 y_n^i
 \end{bmatrix}.
 \end{align*}
The MGD iteration is
\begin{align*}
\bm w_i^{t+1} = &\sum_{j}^T m_{ij}^t\bm w_j^t-\frac{\alpha}{n} X^T(g(X\bm w_i^t)-\bm y^i)-\alpha\rho\begin{bmatrix}
                                                                    0\\
                                                                    \bm w_{i,-1}
                                                                    \end{bmatrix},
\end{align*}
where $g(X\bm w_i^t) = [\frac{1}{1+e^{-[1\ \bm x_1^T]\bm w_i^t}},...,\frac{1}{1+e^{-[1\ \bm x_n^T]\bm w_i^t}}]'$.

The $i$-th label prediction for an instant $\bm x_t$ is predicted 1 if $h(z_t^i)\geq \eta$ and 0 otherwise, where $\eta$ is the threshold. In the experiment, $\eta$ is chosen from $\{0.1,0.2,0.3\}$.
\subsection{Experimental Setup}
\subsubsection{Datasets}
We conduct the multi-label classification on six benchmark multi-label datasets, including regular-scale datasets: emotions, genbase, cal500, and enron; and large-scale datasets: corel5k and bibtex. The details of the datasets are summarized in Table~\ref{dataset}, where $|S|$, $\text{dim}(S)$, L$(S)$, $\text{Card}(S)$, and $\text{Dom}(S)$ represent the number of examples, the number of features, the number of class labels, the average number of labels per example, and feature type of dataset $S$, respectively. The datasets are downloaded from the website of Mulan \footnote{http://mulan.sourceforge.net/datasets-mlc.html} \cite{tsoumakas2009mining}.
\begin{table}[!htb]
    \centering
    \normalsize\caption{Characteristics of the tested multi-label datasets. $|S|$ represents the number of examples, $\text{dim}(S)$ represents the number of features, L$(S)$ represents the number of class labels, $\text{Card}(S)$ represents the average number of labels per example, and $\text{Dom}(S)$ represents feature type of dataset $S$.}
    \label{dataset}   
    \normalsize\centering\begin{tabular*}{\hsize}{@{}@{\extracolsep{\fill}}llllll@{}}
        \hline\hline
        Dataset  & $|S|$& $\text{dim}(S)$ & L$(S)$ & $\text{Card}(S)$ & Dom$(S)$ \\
        \noalign{\smallskip}\hline\noalign{\smallskip}
        emotions & 593  & 72              & 6      & 1.869            & music\\
        genbase  & 662  & 1186            & 27     & 1.252            & biology\\
        cal500   & 502  & 68              & 174    & 26.044           & music\\
        enron    & 1702 & 1001            & 53     & 3.378            & text\\
        corel5k  & 5000 & 499             & 374    & 3.522            & images\\
        bibtex   & 7395 & 1836            & 159    & 2.402            & text\\
        \hline\hline
    \end{tabular*}
\end{table}

\subsubsection{Evaluation Metrics}
Five widely used evaluation metrics are employed to evaluate the performance, including Average precision, Macro-averaging F1, Micro-averaging F1, Coverage score, and Ranking loss. Concrete metric definitions can be found in \cite{zhang2014review}. Note that for the comparison purpose, the coverage score is normalized by the number of labels. For Average precision, Macro averaging F1, and Micro averaging F1, the larger the values the better the performance. For the other two metrics, the smaller the values the better the performance.

\subsubsection{Comparing Algorithms}
We compare our proposed method MGD with three classical algorithms including BR \cite{boutell2004learning}, RAkEL \cite{tsoumakas2010random}, ECC \cite{read2011classifier}, and two state-of-the-art multi-label learning algorithms LIFT \cite{zhang2014lift} and LLSF-DL \cite{huang2016learning}. 

In the experiments, we used the source codes provided by the authors for implementation. BR, ECC, and RAkEL are implemented under the Mulan multi-label learning package \cite{tsoumakas2009mining} using the logistic regression model as the base classifier. Parameters suggested in the corresponding literatures are used, i.e., RAkEL: ensemble size 2$T$ with $k=3$; ECC: ensemble size 30; LIFT: the ratio parameter $r$ is tuned in \{0.1,0.2,...,0.5\}; LLSF-DL: $\alpha$, $\beta$, $\gamma$ are searched in $\{4^{-5},4^{-4},...,4^5\}$, and $\rho$ is searched in $\{0.1,1,10\}$. 
For the proposed approach MGD, $\alpha$ is set to 0.02, $\rho$ is chosen from $\{0.1,0.2,...,1\}$, and $\sigma$ is chosen from $\{0,0.05,0.1,0.15,...,0.3\}$.

\subsection{Experimental Results}
We run the algorithms 5 times on five sets of randomly partitioned training (80 percent) and
testing (20 percent) data, the mean metric values with standard deviations are recorded in Table~\ref{resultsP}. The best performance is shown in boldface, $\uparrow$ indicates the larger the better, and $\downarrow$ indicates the smaller the better. From the results, we can see that MGD outperforms other comparing algorithms in most cases. Specifically, MGD ranks first in 86.7\% cases. Compared with the existing algorithms, MGD introduced a new approach to incorporate label correlations, which is easy to implement and has low complexity. The results demonstrate the effectiveness of the proposed approach in improving the learning performance. 

\begin{table*}[!htb]
\caption{Prediction performance (mean $\pm$ std. deviation) on the tested datasets. Best performance is shown in boldface.}\label{resultsP}
\centering
\begin{threeparttable}[t]
\centering
\begin{tabular}{l|cccccc}
\hline\hline
Dataset  & emotions                & genbase                 & cal500                       & enron                      &corel5k                          &bibtex               \\ 
\hline\hline
 Algorithm & \multicolumn{6}{c}{Average precision $\uparrow$}\\
                                    \hline
   
                                MGD       &\textbf{0.815 $\pm$ 0.014} &\textbf{0.994 $\pm$ 0.006} &\textbf{0.516 $\pm$ 0.012}&\textbf{0.704 $\pm$ 0.016}  & \textbf{0.326 $\pm$ 0.010} & \textbf{0.596 $\pm$ 0.007}    \\ \cline{1-1}
                                BR        & 0.783 $\pm$ 0.027         & 0.985 $\pm$ 0.009       & 0.323 $\pm$ 0.009          & 0.384 $\pm$ 0.009          & 0.132 $\pm$ 0.004          & 0.199 $\pm$ 0.009     \\ \cline{1-1}
                                RAkEL     & 0.782 $\pm$ 0.030         & 0.575 $\pm$ 0.032       & 0.143 $\pm$ 0.003          & 0.168 $\pm$ 0.005          & 0.119 $\pm$ 0.004          & 0.323 $\pm$ 0.008     \\ \cline{1-1}
                                ECC       & 0.774 $\pm$ 0.030         & 0.992 $\pm$ 0.005       & 0.437 $\pm$ 0.007          & 0.554 $\pm$ 0.014          & 0.232 $\pm$ 0.006          & 0.441 $\pm$ 0.011      \\ \cline{1-1}
                                LIFT      & 0.734 $\pm$ 0.013         & 0.535 $\pm$ 0.031       & 0.502 $\pm$ 0.009          & 0.696 $\pm$ 0.011          & 0.289 $\pm$ 0.005          & 0.566 $\pm$ 0.010  \\ \cline{1-1}
                                LLSF-DL   & 0.710 $\pm$ 0.018         & 0.619 $\pm$ 0.053       & 0.470 $\pm$ 0.023          & 0.635 $\pm$ 0.018          & 0.271 $\pm$ 0.008          & 0.593 $\pm$ 0.004  \\ 
                                \hline\hline
 Algorithm& \multicolumn{6}{c}{Macro-averaging F1 $\uparrow$}\\
 \hline
                                MGD       &\textbf{0.668 $\pm$ 0.015} & 0.652 $\pm$ 0.076       &\textbf{0.191 $\pm$ 0.003}  & 0.226 $\pm$ 0.016          & 0.051 $\pm$ 0.003          & \textbf{0.336 $\pm$ 0.003}       \\ \cline{1-1}
                                BR        & 0.619 $\pm$ 0.037         &\textbf{0.915 $\pm$ 0.036}& 0.155 $\pm$ 0.007         & 0.206 $\pm$ 0.021          & 0.148 $\pm$ 0.007          & 0.136 $\pm$ 0.004       \\ \cline{1-1}
                                RAkEL     & 0.629 $\pm$ 0.034         & 0.661 $\pm$ 0.021       & 0.060 $\pm$ 0.011          & 0.112 $\pm$ 0.012          & 0.162 $\pm$ 0.012          & 0.202 $\pm$ 0.008      \\ \cline{1-1}
                                ECC       & 0.622 $\pm$ 0.033         & 0.904 $\pm$ 0.042       & 0.158 $\pm$ 0.010          &\textbf{0.252 $\pm$ 0.017}  & \textbf{0.208 $\pm$ 0.014} & 0.256 $\pm$ 0.009       \\ \cline{1-1}
                                LIFT      & 0.432 $\pm$ 0.017         & 0.026 $\pm$ 0.003       & 0.045 $\pm$ 0.002          & 0.141 $\pm$ 0.011          & 0.024 $\pm$ 0.001          & 0.218 $\pm$ 0.014 \\ \cline{1-1}
                                LLSF-DL   & 0.123 $\pm$ 0.024         & 0.006 $\pm$ 0.003       & 0.143 $\pm$ 0.006          & 0.195 $\pm$ 0.017          & 0.040 $\pm$ 0.003          & 0.210 $\pm$ 0.006  \\ 
                                \hline\hline
Algorithm& \multicolumn{6}{c}{Micro-averaging F1 $\uparrow$}\\
 \hline  
MGD       &\textbf{0.679 $\pm$ 0.014} & 0.966 $\pm$ 0.023       &\textbf{0.481 $\pm$ 0.010}  &\textbf{0.602 $\pm$ 0.014}     & \textbf{0.291 $\pm$ 0.009} & \textbf{0.413 $\pm$ 0.002}     \\ \cline{1-1}
                                BR        & 0.632 $\pm$ 0.035       &\textbf{0.974 $\pm$ 0.010} & 0.331 $\pm$ 0.005          & 0.356 $\pm$ 0.013          & 0.120 $\pm$ 0.003          & 0.145 $\pm$ 0.006       \\ \cline{1-1}
                                RAkEL     & 0.644 $\pm$ 0.035       & 0.740 $\pm$ 0.038         & 0.073 $\pm$ 0.004          & 0.182 $\pm$ 0.008          & 0.132 $\pm$ 0.003          & 0.211 $\pm$ 0.007     \\ \cline{1-1}
                                ECC       & 0.636 $\pm$ 0.031       & 0.926 $\pm$ 0.015         & 0.357 $\pm$ 0.008          & 0.457 $\pm$ 0.014          & 0.091 $\pm$ 0.007          & 0.352 $\pm$ 0.009  \\ \cline{1-1}
                                LIFT      & 0.506 $\pm$ 0.012       & 0.219 $\pm$ 0.025         & 0.316 $\pm$ 0.004          & 0.560 $\pm$ 0.012          & 0.077 $\pm$ 0.004          & 0.378 $\pm$ 0.009      \\ \cline{1-1}
                                LLSF-DL   & 0.199 $\pm$ 0.015       & 0.038 $\pm$ 0.022         & 0.459 $\pm$ 0.014          & 0.548 $\pm$ 0.019          & 0.249 $\pm$ 0.015          & 0.397 $\pm$ 0.010     \\ 
                                \hline\hline
                      \hline\hline 
Algorithm& \multicolumn{6}{c}{Coverage score $\downarrow$}\\
 \hline                                
MGD                                       &\textbf{0.291 $\pm$ 0.016} &\textbf{0.009 $\pm$ 0.004} & 0.740 $\pm$ 0.005        &\textbf{0.218 $\pm$ 0.015}  & \textbf{0.292 $\pm$ 0.003} & \textbf{0.105 $\pm$ 0.003}      \\ \cline{1-1}
                                BR        & 0.314 $\pm$ 0.029       & 0.016 $\pm$ 0.008       & 0.803 $\pm$ 0.007            & 0.259 $\pm$ 0.012          & 0.704 $\pm$ 0.007          & 0.426 $\pm$ 0.013          \\ \cline{1-1}
                                RAkEL     & 0.332 $\pm$ 0.024       & 0.370 $\pm$ 0.022       & 0.983 $\pm$ 0.002            & 0.819 $\pm$ 0.006          & 0.864 $\pm$ 0.005          & 0.366 $\pm$ 0.012            \\ \cline{1-1}
                                ECC       & 0.327 $\pm$ 0.036       & 0.013 $\pm$ 0.005       & 0.794 $\pm$ 0.007            & 0.292 $\pm$ 0.011          & 0.433 $\pm$ 0.007          & 0.236 $\pm$ 0.011              \\ \cline{1-1}
                                LIFT      & 0.358 $\pm$ 0.007       & 0.161 $\pm$ 0.829       & 0.748 $\pm$ 0.012            & 0.224 $\pm$ 0.011          & \textbf{0.292 $\pm$ 0.005} & 0.137 $\pm$ 0.006     \\ \cline{1-1}
                                LLSF-DL   & 0.373 $\pm$ 0.023       & 0.175 $\pm$ 0.036       &\textbf{0.733 $\pm$ 0.007}    & 0.336 $\pm$ 0.013          & 0.486 $\pm$ 0.012          & 0.185 $\pm$ 0.008    \\  
                                \hline\hline
Algorithm& \multicolumn{6}{c}{Ranking loss $\downarrow$}\\
 \hline                                
MGD       &\textbf{0.152 $\pm$ 0.009} &\textbf{0.001 $\pm$ 0.001} &\textbf{0.176 $\pm$ 0.003} &\textbf{0.075 $\pm$ 0.007}   & 0.136 $\pm$ 0.003           & \textbf{0.056 $\pm$ 0.001}     \\ \cline{1-1}
                                BR        & 0.180 $\pm$ 0.027       & 0.005 $\pm$ 0.004       & 0.243 $\pm$ 0.005            & 0.308 $\pm$ 0.010          & 0.368 $\pm$ 0.007          & 0.274 $\pm$ 0.008     \\ \cline{1-1}
                                RAkEL     & 0.194 $\pm$ 0.027       & 0.361 $\pm$ 0.024       & 0.604 $\pm$ 0.004            & 0.587 $\pm$ 0.005          & 0.573 $\pm$ 0.007          & 0.222 $\pm$ 0.008             \\ \cline{1-1}
                                ECC       & 0.194 $\pm$ 0.036       & 0.002 $\pm$ 0.002       & 0.222 $\pm$ 0.003            & 0.119 $\pm$ 0.005          & 0.192 $\pm$ 0.002          & 0.134 $\pm$ 0.008          \\ \cline{1-1}
                                LIFT      & 0.233 $\pm$ 0.007       & 0.138 $\pm$ 0.023       & 0.181 $\pm$ 0.012            & 0.077 $\pm$ 0.006          & \textbf{0.123 $\pm$ 0.002} & 0.075 $\pm$ 0.004     \\ \cline{1-1}
                                LLSF-DL   & 0.254 $\pm$ 0.025       & 0.161 $\pm$ 0.033       & 0.198 $\pm$ 0.010            & 0.130 $\pm$ 0.007          & 0.238 $\pm$ 0.005          & 0.097 $\pm$ 0.004      \\ 
                                \hline\hline
\end{tabular}
   \end{threeparttable}
\end{table*}

\subsubsection{Convergence Analysis} 
Compared to single gradient descent, the transfer in MGD also helps to accelerate the convergence. To see this, the iterations of the total loss, i.e., $\sum_{i=1}^T f_i(\bm w_i)$, are plotted in the first row of Figure~\ref{syn} for three datasets. It can be seen that the MGD converges faster than single gradient descent, especially at early iterations. The iterations of the average precision score are also plotted in the second row of Figure~\ref{syn}. It can be seen that for limited iteration times, the score under MGD is much better than single gradient descent. 

\begin{figure}[!htb]
\centering
\includegraphics[width=\linewidth]{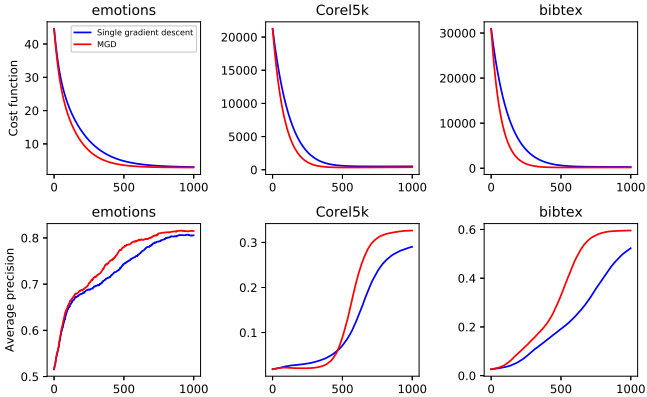}
\caption{Convergence test.}\label{syn}
\end{figure}
 
\subsubsection{Sensitivity Analysis}
We investigate the sensitivity of MGD with respect to the two hyperparameters $\rho$ and $\sigma$, which control the norm 2 regularization strength in the logistic regression and the transfer strength. Due to space limit, we only report the results on the emotions dataset using the average precision score. Figure~\ref{sen} shows how the average precision score varies with respect to $\rho$ and $\sigma$. Figure~\ref{sen}(b) and (c) are obtained by keeping the other parameter fixed at its best setting. It can be seen that both $\rho$ and $\sigma$ influence the performance. While, under a relatively wide range of parameters combinations, the score does not vary too much.  
\begin{figure}[!htb]
\centering
\includegraphics[width=\linewidth]{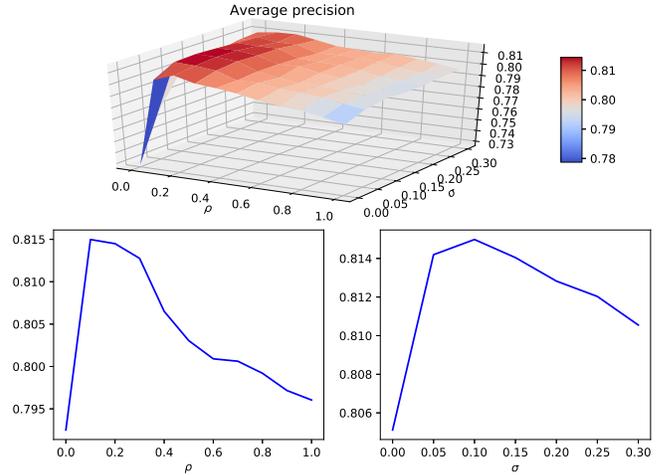}
\caption{Sensitivity analysis on the emotions dataset.}\label{sen}
\end{figure}

\section{Conclusion}
 In this paper, we propose the MGD algorithm for multi-label learning. Different from the state-of-the-art, MGD treats multi-label learning as multiple independent learning tasks, and multi-label learning can be significantly improved only via transferring correlated model parameter values during the learning of the independent labels. The convergence property of the learning mechanism has been theoretically proven. MGD is easy to implement, has less requirement on the training model, can achieve seamless asymmetric transformation such that negative transfer is mitigated, and can benefit from parallel
computing when the number of tasks is large. The proposed algorithm has been tested on multi-label learning datasets and has been compared with both classical and the state-of-the-art approaches. The competitive experimental results validate the effectiveness of MGD.

\bibliographystyle{ieeetr}
\bibliography{myref}

\end{document}